\newcommand{\diag}{\operatorname{diag}}
\newcommand{\Id}{\operatorname{\mathbf{I}}}
\newcommand{\supp}{\operatorname{supp}}
\newcommand{\soft}{\operatorname{softmax}}
\newcommand{\mlp}{\operatorname{m}}
\newcommand{\heuristic}{\operatorname{Heuristic}}
\newcommand{\1}{\mathbf{1}}
\newcommand{\E}{\mathbb{E}}
\newcommand{\R}{\operatorname{\mathbb{R}}}
\newcommand{\N}{\operatorname{\mathbb{N}}}
\newcommand{\balpha}{\boldsymbol{\alpha}}
\newcommand{\bPsi}{\operatorname{\mathbf{\Psi}}}
\newcommand{\bLambda}{\operatorname{\mathbf{\Lambda}}}
\newcommand{\cL}{\operatorname{\mathbf{\mathcal{L}}}}
\newcommand{\bA}{\operatorname{\mathbf{A}}}
\newcommand{\bD}{\operatorname{\mathbf{D}}}
\newcommand{\bH}{\operatorname{\mathbf{H}}}
\newcommand{\bP}{\operatorname{\mathbf{P}}}
\newcommand{\bQ}{\operatorname{\mathbf{Q}}}
\newcommand{\bW}{\operatorname{\mathbf{W}}}
\newcommand{\bX}{\operatorname{\mathbf{X}}}
\newcommand{\ba}{\operatorname{\mathbf{a}}}
\newcommand{\bg}{\operatorname{\mathbf{g}}}
\newcommand{\bh}{\operatorname{\mathbf{h}}}
\newcommand{\bp}{\operatorname{\mathbf{p}}}
\newcommand{\bq}{\operatorname{\mathbf{q}}}
\newcommand{\bs}{\operatorname{\mathbf{s}}}
\newcommand{\bx}{\operatorname{\mathbf{x}}}
\newcommand{\low}{\text{low}}
\newcommand{\band}{\text{band}}
\newtheorem*{thm*}{Theorem}
\newtheorem{lem}{Lemma}
\newtheorem*{lem*}{Lemma}
\theoremstyle{definition}
\newtheorem*{defn*}{Definition}
\newtheorem*{ex*}{Example}
\title{Can Hybrid Geometric Scattering Networks Help Solve the Maximum Clique Problem?}
\author{%
  Yimeng Min\thanks{Equal contribution; order determined alphabetically} \\
  Department of Computer Science \\
  Cornell University \\
  Ithaca, NY, USA \\
  \texttt{min@cs.cornell.edu} \\
  \And
  Frederik Wenkel\footnotemark[1]\\
  Department of Mathematics and Statistics \\
  Universit\'{e} de Montr\'{e}al \\
  Mila -- Quebec AI Institute \\
  Montreal, QC, Canada \\
  \texttt{frederik.wenkel@umontreal.ca} \hspace{-5pt} \\
  \And
  Michael A. Perlmutter\\
  Department of Mathematics \\
  University of California \\
  Los Angeles, CA, USA \\
  \texttt{perlmutter@ucla.edu} \\
  \And
  Guy Wolf\\
  Department of Mathematics and Statistics \\
  Universit\'{e} de Montr\'{e}al \\
  Mila -- Quebec AI Institute \\
  Montreal, QC, Canada \\
  \texttt{guy.wolf@umontreal.ca} \\
}
\begin{document}

\maketitle

\begin{abstract}
We propose a geometric scattering-based graph neural network (GNN) for approximating solutions of the NP-hard maximum clique (MC) problem. We construct a loss function with two terms, one which encourages the network to find highly connected nodes and the other which acts as a surrogate for the constraint that the nodes form a clique. We then use this loss to train an efficient GNN architecture that outputs a vector representing the probability for each node to be part of the MC and apply a rule-based decoder to make our final prediction. The incorporation of the scattering transform alleviates the so-called oversmoothing problem that is often encountered in GNNs and would degrade the performance of our proposed setup. Our empirical results demonstrate that our method outperforms representative GNN baselines in terms of solution accuracy and inference speed as well as conventional solvers like Gurobi with limited time budgets. Furthermore, our scattering model is very parameter efficient with only $\sim$ 0.1\% of the number of parameters compared to previous GNN baseline models.
\end{abstract}

\section{Introduction}

The success of Graph Neural Networks (GNNs) for a variety of machine learning tasks~\citep{scarselli2008graph,gori2005new,kipf2016semi,gilmer2017neural} has
sparked interests in  using GNNs to solve graph combinatorial optimization (CO) problems~\citep{li2018combinatorial,karalias2020erdos}. For example, \citet{joshi2019efficient} introduces a GNN-based method for approximately solving the travelling salesman problem (TSP), while \citet{karalias2020erdos} approximate solutions of the maximum clique (MC) problem. Solving such problems in an end-to-end fashion via GNNs is challenging for several reasons. First of all, many CO problems are provably either NP-hard or NP-complete. Therefore, learning CO solvers in a supervised or semi-supervised fashion can be computationally infeasible since the cost of generating the ground truth labels grows exponentially with the problem size. Second, the solution to the CO problems  must satisfy a number of constraints or limitations (e.g. belonging to a clique). Although these constraints can be imposed during training time, there
is still no guarantee that these constraints are still satisfied at test time.  
Furthermore, solving graph combinatorial problems largely depends on the expressive power of GNNs. Many GNNs aggregate information  via local averaging, which can be interpreted as a smoothing operation. This degrades the expressive power of GNNs and results in the so-called oversmoothing problem~\citep{li2018deeper} where neighboring nodes have similar representations and are difficult to distinguish from each other. This is problematic if a node that is not in the solution set borders many points that are in the solution set.

\begin{figure*}[t]
\begin{center}
\centerline{\includegraphics[width=1.\columnwidth]{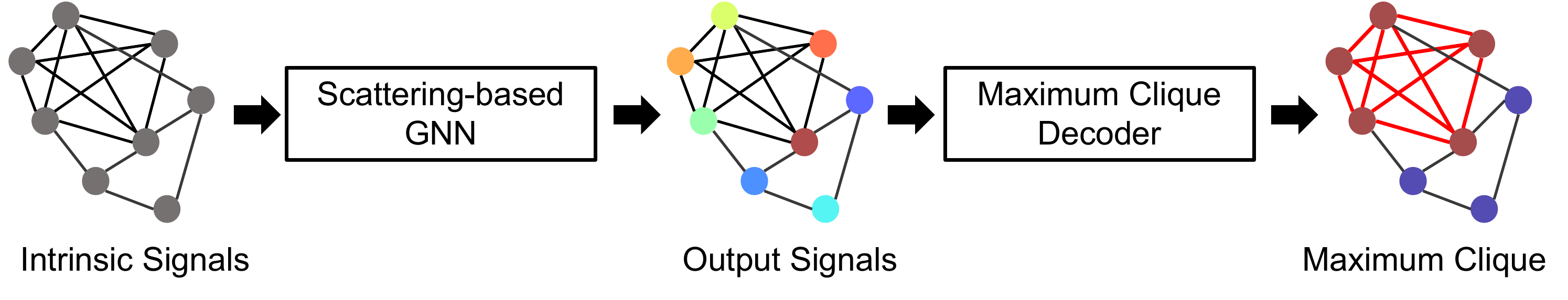}}
\caption{Illustration of our model pipeline. We  use a scattering-based GNN to learn a  discriminative node representation and use a decoder to extract the MC from the learned representation. }
\label{fig:modelpipeline}
\end{center}
\end{figure*}

The purpose of this paper is to use a geometric scattering-based  GNN to fast approximate the solution of the maximum clique (MC) problem, that is to find the largest complete subgraph contained within a large graph $G$. 
This problem has many applications including pattern recognition and knowledge discovery. For instance, finding the maximum clique of the related connection graphs can be used to handle object recognition tasks~\citep{horaud1989stereo}. It is also tightly related to biological applications, such as helping to solve genome-scale elucidation of biological networks~\citep{zhang2005genome}.  
Our use of geometric scattering, rather than a more traditional GNN is motivated by recent work~\citep{wenkel2022overcoming} showing that the geometric scattering transform can help overcome the oversmoothing problem via the use of band-pass wavelet filters in conjunction with GCN-type filters. This is particularly important in the context of the MC problem because it is critical to distinguish a point which connects to many members of the clique from an actual member of the clique. As we shall show, our method, which utilizes band-pass wavelet filters is able to better detect the border between the MC and the rest of the graph. This parallels the traditional use of wavelets as edge detectors in image processing~\citep{grossmann1988wavelet}.

\begin{figure}[!b]
\vskip 0.2in
\begin{center}
\centerline{\includegraphics[width=0.95\columnwidth]{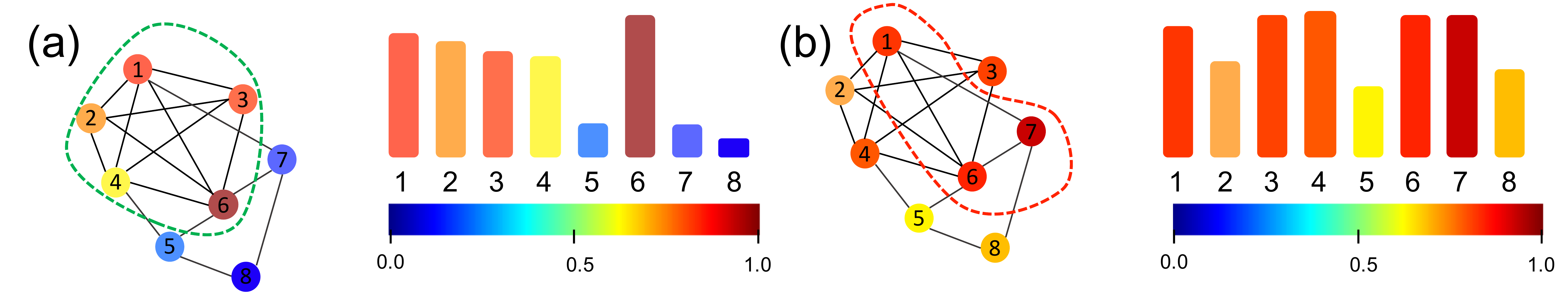}}
\caption{Illustration of the oversmoothing problem. Left: A discriminative representation, which can be interpreted as a probability for each node to belong to the maximum clique or not. Right: Due to the oversmoothing problem, the GNN generates a smooth representation, making MC and non-MC nodes difficult to discriminate. }
\label{fig:node_distribution}
\end{center}
\vskip -0.2in
\end{figure}

Our method uses a two-phase strategy shown in Figure~\ref{fig:modelpipeline}.
We first use an unsupervised 
learning model to  generate an efficient representation of how likely each node is to be part of the maximum clique. Then, this representations is fed into a constraint-preserving decoder to generate the maximum clique. 
Ideally, given a graph $G$ with $n$ nodes, we would build a representation $\bp \in [0,1]^{n}$ where the nodes in the maximum clique are assigned the value 1 (True) while all other nodes are assigned 0 (False), so that we can `cut' the MC from the graph via $O(n)$ operations. In practice, we will build a representation which assigns large probabilities to the maximum clique nodes and small probabilities to the other nodes.
This will allow us to extract the maximum clique by choosing the vertices where $\bp$ is large. As illustrated in Figure~\ref{fig:node_distribution}, this approach works best if $\mathbf{p}$ is not excessively smooth. However, so-called oversmoothing is a known limitation of many GNN models~\citep{li2018deeper}. Figure~\ref{fig:node_distribution} (a) presents a situation where oversmoothing does not occur. The signal $\bp$  is much larger on the nodes $(1,2,3,4,6)$, which form the maximum clique, than it is on the other nodes $(5,7,8)$. Therefore, the decoder can easily find the MC by, e.g., setting a threshold. Figure \ref{fig:node_distribution} (b), on the other hand, illustrates a situation in which oversmoothing occurs and $\bp$ exhibits less variability. Indeed, there are no blue colored nodes at all. In this setting, it is nearly impossible to distinguish node $7$, which does not belong to the maximum clique, from the nodes $1,6$, and therefore extracting the MC from $\bp$ is difficult if not impossible.

\section{Background and Related Work}
\subsection{Graph Signal Processing}\label{sec:gsp}
Consider a graph $G = (V,E)$ with a set of nodes (or vertices) $V\coloneqq \{v_1,\dots,v_n\}$ and a set of undirected edges $E\subset V\times V$.
For a function $x:V\rightarrow \R$ defined on $V$, we will, in minor abuse of notation, identify $x$ with the vector $\bx$, where $\bx[i]=x(v_i)$.
We let $\bW\in\R^{n\times n}$ denote  the \textit{adjacency matrix} of $G$ and 
define the \textit{symmetric normalized graph Laplacian} by $\cL \coloneqq \Id_n - \bD^{-1/2}\bW\bD^{1/2},$ where $d_i \coloneqq \sum_{j=1}^n W[v_i,v_j]$ is the \textit{degree} of the node $v_i$ and $\bD \coloneqq \diag(d_1,...,d_n) \in \R^{n\times n}$ is the degree matrix.
%
%
We will let $\mathbf{q}_i$ and $\lambda_i$ denote the (normalized) eigenvectors and eigenvalues of $\cL$ with $\cL\mathbf{q}_i=\lambda_i\mathbf{q}_i$, and make use of the eigendecomposition $\cL = \bQ\bLambda\bQ^\top,$ where $\bQ$ is the orthogonal matrix whose $i$-th column is the normalized eigenvector $\bq_i$, and $\bLambda \coloneqq \diag(\lambda_1,...,\lambda_n)$. 
Notably, the eigenvectors $\bq_1,\dots\bq_n$ can be seen as a generalization of Fourier modes in the context of graph domains with the corresponding eigenvalues representing the increasing frequencies $0\leq \lambda_1\leq \dots\leq\lambda_n\leq 2$. This link may be established by viewing frequency as a measure of variation, in particular the variation of (degree-normalized) Fourier modes across edges, i.e., $$\lambda_i = \bq_i^\top \cL \bq_i = \sum_{\{u,v\}\in E} (\tilde\bq_i[u] - \tilde\bq_i[v])^2,$$ 
for $\tilde\bq_i \coloneqq \bD^{-1/2}\bq_i.$ 
The graph \textit{Fourier transform} of a signal vector $\bx$ is then defined as by 
$\hat{\bx}[i] = \langle\bx,\bq_i\rangle$ and the inverse Fourier is given by $\bx = \sum_{i=1}^n \hat{\bx}[i]\bq_i$.
Compactly written, we have $\boldsymbol{\hat x} = \boldsymbol{Q}^\top \boldsymbol{x}$ and $\boldsymbol{x} = \boldsymbol{Q} \boldsymbol{\hat x}$. 

In the Euclidean setting, it is known that convolution in the spatial domain corresponds to multiplication in the Fourier domain.
This fact has motivated work such as \citet{shuman2016vertex}, which defines the convolution of a signal $\bx$ with a filter $\bg$ to be the unique vector verifying $(\widehat{\boldsymbol{g}\star \bx})[i] = \boldsymbol{\hat g}[i]\boldsymbol{\hat x}[i]$, which implies that
$$
    \bg\star \bx
    = \sum_{i=1}^n \boldsymbol{\hat g}[i] \boldsymbol{\hat x}[i] \bq_i
    = \sum_{i=1}^n \boldsymbol{\hat g}[i] \langle \bq_i, \bx \rangle \bq_i
    = \bQ \boldsymbol{\widehat{G}} \bQ^\top \bx,
$$ 
where $\boldsymbol{\widehat{G}} \coloneqq \diag(\boldsymbol{\hat g}) = \diag(\boldsymbol{\hat g}[1], \dots, \boldsymbol{\hat g}[n]).$ 
When incorporating this notion of convolution into a graph neural network, a common choice \citep[e.g.,][]{defferrard2016convolutional} is to require the coefficients $\boldsymbol{\hat g}[i]$ to be polynomials of the eigenvalues $\lambda_i, i\in[n]$, i.e., $\boldsymbol{\hat g}[i] \coloneqq \sum_k \gamma_k \lambda_i^k$ in which case we have $\boldsymbol{\widehat{G}} = \sum_k \gamma_k \boldsymbol{\Lambda}^k$. This allows one to implement convolution in the spatial domain by verifying that 
$
    \boldsymbol{g}\star \boldsymbol{x} = \sum_k \gamma_k \mathcal{L}^k \boldsymbol{x}.
$
In particular, the entire method may be implemented without the need to diagonalize a matrix which is extremely expensive for large graphs.

\subsection{Graph Convolutional Networks}\label{sec:gcn}

In \cite{kipf2016semi}, the authors  set  $\boldsymbol{\hat g}[i] \coloneqq \theta (2 - \lambda_i)$, where $\theta$ is a real number, which yields
\begin{equation}\label{eq_conv. parametrization}
    \bg_\theta \star \bx =\theta(2\Id_n -\cL)=\theta \left(\Id_n + \bD^{-1/2} \bW \bD^{-1/2}\right) \bx.
\end{equation}
As the eigenvalues of the above convolutional filter lie in [0,2] and could lead to vanishing or exploding gradients, the authors then apply a renormalization trick which replaces $\Id_n + \bD^{-1/2} \bW \bD^{-1/2} $ with  $\bA \coloneqq(\bD+\Id)^{-1/2} (\bW+\Id) (\bD+\Id)^{-1/2}.$
The layer-wise propagation rule of GCN is then defined by
$$
 \bx_j^\ell = \sigma \big( \sum_{i=1}^{N_{\ell-1}} \theta_{ij}^\ell \bA \bx_i^{\ell-1} \big),
$$
where $\bx_i^{\ell-1} \in \mathbb{R}^n$ is the $i$-th feature vector in layer $\ell-1$, $\theta_{ij}^\ell$ are a trainable parameters, $\bx^\ell_j$ is the $j$-th feature activation vector in layer $\ell$ and $\sigma(\cdot)$ is a nonlinear activation function.
We can summarize this in matrix notation as
\begin{equation}\label{eq_gcn}
    \bX^\ell = \sigma \left( \bA \bX^{\ell-1} \boldsymbol{\Theta}^\ell \right),
\end{equation}
where $\boldsymbol{\Theta}^\ell\in\mathbb{R}^{N_{\ell-1}\times N_\ell}$ is the weight-matrix of layer $\ell$ and $\bX^\ell\in\mathbb{R}^{n\times N_\ell}$ contains the activations output by layer $\ell$. The GCN model updates node features at every node by averaging over the node features of the node itself and its neighbors, which enforces similarity throughout node neighborhoods. This makes nodes increasingly difficult to discriminate for `deeper' models and is widely referred to as the oversmoothing problem~\citep{li2018deeper}.
From a graph spectral theory perspective, oversmoothing is related to low-pass filtering as the filter in Eq.~\ref{eq_conv. parametrization} puts larger weights on the low-frequency spectrum (as $0\leq \lambda_i\leq 2$). This motivates the idea of using GCN-type low-pass filters in conjunction with band-pass filters, that can be implemented, for example, using graph scattering~\citep{min2020scattering}, which we discuss in the following. 

\subsection{Graph Scattering}\label{sec:graphscattering}

The graph scattering transform \citep{gama:diffScatGraphs2018,zou:graphCNNScat2018,gao:graphScat2018} is a wavelet-based model for machine learning on graphs. Unlike the one-hop localized low-pass filters used in GCN, which aim to promote smoothness between neighboring nodes, these wavelets are band-pass filters, which in turn incorporate long-range dependencies through the large spatial support of the used aggregations. 
The scattering transform is based upon raising the lazy random walk matrix
 $$ \bP\coloneqq\frac{1}{2}\left(\Id_n  + \bW \bD^{-1}\right)$$
  to different powers in order to capture the diffusion geometry of the graph $G$ at various time scales. In particular, subtracting such powers allows us to detect changes in these diffusion geometries. Following the lead of \cite{coifman2006diffusion}, for $k\in\mathbb{N}_0$, we define a wavelet matrix   $\boldsymbol{\Psi}_k\in\mathbb{R}^{n\times n}$ at scale $2^k$ by
\begin{equation}\label{eq:wavelet}
    \boldsymbol{\Psi}_0 \coloneqq \Id_n - \bP, \quad
    \boldsymbol{\Psi}_k \coloneqq \bP^{2^{k-1}} - \bP^{2^k}, \quad k\geq 1.
\end{equation}
Intuitively, at every node, these diffusion wavelets can be seen as a comparison operation that calculates the difference of the averaged features of two neighborhoods of different sizes (namely sizes $2^{k-1}$ and $2^k$).
The geometric scattering transform is constructed through an alternating cascade on wavelet filters and pointwise nonlinearities $\sigma:\mathbb{R}\rightarrow\mathbb{R}$,
a so-called scattering path $p\coloneqq (k_1,\ldots,k_m)$ that outputs
\begin{equation}
    \label{eq_original node.scattering}
    \boldsymbol{U}_p \boldsymbol{x} \coloneqq \boldsymbol{\Psi}_{k_m} \circ \sigma \circ \boldsymbol{\Psi}_{k_{m-1}} \cdots \sigma \circ \boldsymbol{\Psi}_{k_2} \circ \sigma \circ \boldsymbol{\Psi}_{k_1}\boldsymbol{x}.
\end{equation}


\section{Model}

Our method is centered around three main components, (i) a hybrid scattering-GCN model $M$ (Sec.~\ref{subsec:emb}-\ref{subsec:out}) that transforms a small set of simple node-level statistics represented by the matrix $\bX\in\R^{n\times d}$ to a probability vector $\bp\in[0,1]^n$  representing the probabilities that each node is part of the maximum clique, (ii) an easy-to-optimize unsupervised loss function $L$ (Sec.~\ref{subsec:loss}) that is small for ``good'' probability vectors $\bp^\star$; and (iii) a rule-based decoder $D$ (Sec.~\ref{subsec:decoder}) that maps the probability vector to the maximum clique $C^\star \subset V$ of $G$.
$$
    \bX \overset{M}{\longrightarrow} \bp \overset{L}{\longleftrightarrow} \bp^\star \overset{D}{\longrightarrow} C^\star
$$

\subsection{Embedding Module}\label{subsec:emb}

The input of our model is a node feature matrix $\bX\in\R^{n\times d}$,  containing $d$ features for each node. In our experiments, we set $d=3$ and let the features be the eccentricity, the clustering coefficient, and the logarithm of the degree of each node. The encoder transforms the node features to $d_h$-dimensional embeddings $\bH^0$ using a multi-layer perceptrone (MLP) $\mlp_{\text{emb}}: \R^{d} \rightarrow \R^{d_h}$, i.e., $\bH^0 \coloneqq \mlp_{\text{emb}}(\bX)$, which will be used as the input to the diffusion module introduced in the next subsection.
\subsection{Diffusion Module}\label{subsec:scat}

The diffusion module consists of a cascade of $K\in\N$ aggregation (or diffusion) layers with operations that are chosen for each node via an attention mechanism. Our network is based upon  the framework introduced in~\cite{wenkel2022overcoming}.
However, an important deviation is to store the initial node representation $\bH^0$ and every intermediate node representation $\bH^\ell, 1\leq \ell\leq K$, in a list of \emph{readouts}. 

In each layer $\ell$, every node has access to node representations from a set a of filters $\mathcal{F}$ that contains a selection of low-pass and band-pass filters.
Similar to~\cite{wenkel2022overcoming}, our band-pass filters $f_{\low,r}$ are modified GCN filters that have the form
\begin{equation}\label{eq:filter-low}
    f_{\low,r}(\bH^{\ell-1}) = \bA^{r} \bH^{\ell-1} 
\end{equation}
and are parameterized by the power $r\geq 1$ of the matrix $\bA$.
Similarly, we define the scattering filter $f_{\band,k}$ of order $k$ according to
\begin{equation}\label{eq:filter-band}
        f_{\band,k}(\bH^{\ell-1}) = \bPsi_{k} \bH^{\ell-1}. 
\end{equation}

Next, we want to obtain data-driven scores $s_f(v)$ that determine the importance of filter $f\in \mathcal{F}$ for node $v\in V$. We set $\bH_f^\ell \coloneqq f(\bH^{\ell-1})$ and calculate the scores via an attention mechanism, setting
\begin{equation}\label{eq:attention}
    \bs_f^\ell \coloneqq \sigma\left( \bH_f^\ell \Vert \bH^{\ell-1} \right) \ba^\ell,
\end{equation}
where $\Vert $ denotes the concatenation operation.
The attention mechanism, at every node, calculates the dot product of a learned attention vector $\ba^\ell\in\R^{2d_h}$ with the filtered node representation of the node concatenated to its previous representation.
We further deviate from the model in~\citet{wenkel2022overcoming} by rewiring the nonlinearity $\sigma(\cdot)$ in Eq.~\ref{eq:attention}.
They apply it after the multiplication with the attention vector $\ba^\ell$ similar to the traditional graph attention mechanism~\citep{velivckovic2018graph}.
In contrast, we apply it before the multiplication with the attention vector following the lead of~\citet{brody2021attentive}, who showed this constitutes an even more expressive attention mechanism.
The $i$-th entry of $\bs_f^\ell \in \R^n$ then contains the importance score of filter $f$ for node $v_i$.
We normalize the scores across the filters using the softmax function, i.e., $\alpha_f(v) = \soft_{\mathcal{F}}(s_f(v))$, and store the resulting scores in attention vectors $\balpha_f^\ell\in\R^n, f\in \mathcal{F}$. We then update node representations via
\begin{equation}\label{eq:agg}
    \bH_\text{agg}^{\ell} \coloneqq \sum_{f\in \mathcal{F}} \balpha_f^\ell \odot \bH_f^\ell.
\end{equation}
where $\odot$ is the element-wise multiplication (applied separately to each column of $\bH_f^\ell$).
Notably, the softmax function enforces that every node is likely to focus mostly on one single filter. We note that the method is compatible with multi-head attention but omit further details as it is not necessary for the present experiments.
Finally, we transform the aggregated node representations via an MLP $\mlp^\ell: \R^{d_h} \rightarrow \R^{d_h}$, i.e., $\bH^\ell \coloneqq \mlp^\ell(\bH_\text{agg}^{\ell})$.
We repeat the above process for $K\in \N$ iterations (or layers) and store the intermediate representations in a list of readouts $\mathcal{R} = \{ \bH^\ell \}_{\ell=0}^K$.

\subsection{Output Module}\label{subsec:out}

In the output module, we combine the information from the list of readouts
$\mathcal{R}$.
We first concatenate the readouts horizontally to $$\bH_\text{cat} \coloneqq \Vert_{\ell=0}^K \bH^\ell,$$ where $\bH_\text{cat} \in \mathbb{R}^{n \times d_h(K+1)}$ and then transform $\bH_\text{cat}$ to a vector $\bh\in\R^n$ using an MLP $m_\text{out}: \R^{d_h (K+1)} \rightarrow \R$, i.e., $\bh \coloneqq \mlp_\text{out}(\bH_\text{cat}).$ Finally, to obtain a vector of probabilities, we apply min-max normalization, i.e., $\bp \coloneqq (\bh - \min(\bh) \cdot \1_n) / (\max(\bh) - \min(\bh)),$ and interpret the $i$-th entry of $\bp$ as the probability that node $v_i$ is part of the maximum clique.


\subsection{Training Loss}\label{subsec:loss}
We now derive an unsupervised training loss for the maximum clique problem, which was also obtained by a different derivation in the work of \citet{karalias2020erdos}.
Letting $\Omega$ denote the set of cliques of a graph $G=(V,E)$, we note that the task of finding the maximum clique is equivalent to identifying the clique $C\in\Omega$ that contains the most edges, i.e., maximizes the objective function $$L^\star(C) \coloneqq \sum_{u,v\in C} w_{u,v},$$ 
where $w_{u,v} = 1$ if  node $u$ and $v$ are connected otherwise 0.
Our approach relies on producing a vector $\bp$ such that $\bp_v\approx1$ if $v$ is in the maximum clique and $\bp_v\approx0$ otherwise. We consider the vector $\bx\in\R^n$ where $\bx_v=1$ if $v$ is in the maximum clique and $\bx_v=0$ otherwise.
We then model $\bx_v$ as a Bernoulli random variable where $P(\bx_v=1)=\bp_v.$ Our goal then becomes to maximize the expectation
\begin{equation}\label{eq:exp}
    \E\left[L^\star(C)\right] = \E\left[\sum_{(u,v)\in E} w_{u,v} \bx_u \bx_v\right] = \sum_{(u,v)\in E} w_{u,v} \bp_u \bp_v = \bp^\top \bW \bp.
\end{equation}

Maximizing Eq.~\ref{eq:exp} encourages $\bp$ to be large on highly connected nodes.
However, we also want the support of $\bp$ to be concentrated within a clique.
This naturally motivates the constrained optimization problem of minimizing
\begin{equation*}\label{eqn: nonconvex}
    L_1(\bp) \coloneqq -\bp^\top \bW \bp\quad\text{subject to: }\supp(\bp)\text{ is contained in a clique}.
\end{equation*}
However, the constraint that $\bp$ is contained in a clique is hard to enforce directly. In order to construct a surrogate, we consider the complement graph $\overline{G} = (\overline{V}, \overline{E})$.
The vertices of $\overline{G}$ are taken to be the same as $G$, i.e., $\overline{V}=V$, while the edges are defined by the rule that for $u\neq v$ there is an edge between $u$ and $v$ in $\overline{G}$ if and only if there is \emph{not} an edge between them in $G$, i.e., $\overline{E} = \{(u,v) \in V\times V : u\neq v\} \backslash E$.
We denote the adjacency matrix of $\overline{G}$ by $\overline{\bW} = (\overline{w}_{u,v})_{u,v\in V}$, and note that
\[
    \overline{w}_{u,v} \coloneqq
    \begin{cases} 
    1 & \text{if } (u,v)\notin E~\text{and}~ u\neq v,\\
    0 & \text{otherwise.}
    \end{cases}
\]
In particular, $\overline{\bW}$ has zeros on the diagonal and can be computed from $\bW$ via $\overline{\bW} = \1_{n\times n} - (\Id + \bW)$.
 The following Lemma indicates that a second loss term $L_2(\bp) \coloneqq \bp^\top \overline{\bW} \bp$ can be used to ensure that mass is primarily concentrated in a set of nodes that form a clique.

\begin{lem}\label{lem:loss2}
    Consider a graph $G = (V,E)$, a signal $\bp\geq 0$ and define $\supp(\bp) \coloneqq \{v\in V : \bp_v>0\}$. Then, $L_2(\bp) = \bp^\top \overline{\bW} \bp = 0$ if and only if there exists a clique $C\subset V$ such that the support of $\bp$ is contained in $C$.
\end{lem}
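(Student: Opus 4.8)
The plan is to expand $L_2(\bp)$ into an explicit sum over vertex pairs and to use the nonnegativity hypothesis $\bp \geq 0$ to convert the single scalar equation ``$L_2(\bp)=0$'' into a pairwise condition on $\supp(\bp)$. Writing out the quadratic form and recalling that $\overline{\bW}$ has zero diagonal, I would first record that
$$
    L_2(\bp) = \bp^\top \overline{\bW} \bp = \sum_{u\neq v} \overline{w}_{u,v}\,\bp_u \bp_v = \sum_{\substack{u\neq v \\ (u,v)\notin E}} \bp_u \bp_v .
$$
Since $\bp\geq 0$, every summand $\bp_u\bp_v$ is nonnegative, so the entire sum vanishes if and only if each individual term vanishes. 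This reduction is the conceptual heart of the argument, and it is precisely where the hypothesis $\bp \geq 0$ is used: without it, positive and negative contributions could cancel and the equivalence would break down.

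For the forward direction I would assume $L_2(\bp)=0$. By the reduction above, $\bp_u\bp_v = 0$ whenever $u\neq v$ and $(u,v)\notin E$. Contrapositively, any two distinct vertices $u,v$ with $\bp_u>0$ and $\bp_v>0$ must satisfy $(u,v)\in E$. In other words, the vertices of $\supp(\bp)$ are pairwise adjacent in $G$, so $\supp(\bp)$ is itself a clique; taking $C \coloneqq \supp(\bp)$ (or any clique containing it) yields the desired containment.

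For the reverse direction I would assume $\supp(\bp)\subseteq C$ for some clique $C$ and verify that each summand in the displayed expression vanishes. If either $u\notin\supp(\bp)$ or $v\notin\supp(\bp)$, then $\bp_u\bp_v=0$. Otherwise $u,v\in\supp(\bp)\subseteq C$ are distinct, so, because $C$ is a clique, $(u,v)\in E$ and hence $\overline{w}_{u,v}=0$; either way the term is $0$, giving $L_2(\bp)=0$.

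I do not anticipate a genuine obstacle here, as the argument is elementary once the quadratic form is expanded termwise. The only point requiring care — and the one I would flag explicitly — is the nonnegativity of $\bp$, since it is exactly what licenses passing from the vanishing of the full sum to the vanishing of each individual product $\bp_u\bp_v$.
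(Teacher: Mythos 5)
Your proof is correct and follows essentially the same route as the paper's: both expand the quadratic form $\bp^\top\overline{\bW}\bp$ as a sum of nonnegative terms $\bp_u\bp_v$ over non-adjacent pairs and use $\bp\geq 0$ to force each term to vanish. The only cosmetic difference is that you argue the forward direction directly (concluding $\supp(\bp)$ is itself a clique) where the paper phrases it as a proof by contradiction.
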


\begin{proof}
    We first assume $\bp^\top \overline{\bW} \bp = 0$ and suppose for the sake of a contradiction that there exists no clique $C\subset V$ with $\supp(\bp)\subset C$. Then, there exist at least two nodes $u_0,v_0\in \supp(\bp)$ with $\{u_0,v_0\}\not\in E$. Now, $\bp^\top \overline{\bW} \bp = \sum_{\{u,v\}\not\in E} \bp_u \bp_v \geq \bp_{u_0} \bp_{v_0} > 0,$ which is a contradiction.
    
    For the opposite direction, we assume that there exists a clique $C\subset V$ such that $\supp(\bp)\subset C$. Hence, every pair of distinct nodes in $\supp(\bp)$ is connected by an edge and consequently $\overline{w}_{u,v}=0$ for all $u,v\in \supp(\bp)$. Thus, $\bp^\top \overline{\bW} \bp = \sum_{u,v\in V} \overline{w}_{u,v} \bp_u \bp_v = \sum_{u,v\in \supp(\bp)} \overline{w}_{u,v} \bp_u \bp_v = 0.$
\end{proof}

In light of Eq.~\ref{eq:exp} and Lemma \ref{lem:loss2}, we now define our 
  training loss as
\begin{equation}\label{eq:loss}
      L(\bp) \coloneqq L_1(\bp) + \beta L_2(\bp) = - \bp^T {\bW} \bp + \beta \bp^T \overline{\bW} \bp.
\end{equation}
We note that the loss function decomposes into two terms. The first term, $L_1$, 
encourages the allocation of mass at highly connected nodes, while the second term $L_2$ encourages most of the mass of  $\bp$ to be contained in a clique.  Moreover, $\beta$ is a hyper-parameter which is used to balance the contributions of the two terms.
As mentioned, an almost identical loss function was already used in~\citet{karalias2020erdos} with
further details discussed in Appendix~\ref{app:loss}.

Similar to many other GNN applications it is desirable to use the sparsity of the graph G and hence also $\bW$. We note that, although a sparse $\bW$ signifies a dense $\overline{\bW}$, we can efficiently evaluate the second loss term as $\bp^T \overline{\bW} \bp = (\sum_{v=1}^n \bp_v)^2 - \bp^T \bW \bp - \sum_{v=1}^n \bp_v^2$.


\subsection{Decoder}\label{subsec:decoder}
In order to extract the maximum clique from $\bp$, we use a greedy decoder detailed in in Algorithm~\ref{alg:mc_decoder}. Our algorithm creates a set $\Omega=\{\hat C_j\}_{j=1}^\kappa$ of $\kappa$ cliques and then selects the $\hat C_j$ with largest cardinality. To construct each $\hat C_j,$ we first create a sorting function $\pi: V \rightarrow \{1,\dots,\vert V\vert\}$ which puts the nodes in descending order, so $\bp_{\pi^{-1}(1)}\geq \ldots\geq  \bp_{\pi^{-1}(n)}.$
Each $\hat{C}_j$ starts as a single node $\pi^{-1}(j)$. We then consider $\tau-j$ additional candidates, $\pi^{-1}(j+1),\ldots,\pi^{-1}(\tau)$ and accept each candidate node $v_{\text{canditate}}$ if $\hat{C}_j\cup\{v_\text{candidate}\}$ forms a clique. This is tested by  verifying that $\bx^\top\overline{\bW}\bx = 0$  for $\bx \coloneqq \sum_{v\in\hat{C}_j\cup\{v_\text{candidate}\}} \1_{v}.$
We remark that in order for our decoder to predict the true maximum clique exactly, we must choose the parameters $\tau$ and $\kappa$ such that there exists a $j$, $1\leq j \leq \kappa$, such that $j\leq \pi^{-1}(i)\leq \tau$ for all $v_i$ in the MC. 
On simple datasets, such as IMDB, it suffices to set $\kappa=1$. More complex datasets such as Twitter require larger values of $\kappa$. 
Importantly, we note that the $\hat{C}_j$ can be computed in parallel, so increasing the value of $\kappa$ does not create major scalability issues.
We note that setting the hyperparameters of the decoder requires some prior knowledge of the largest sizes of the maximum cliques present in the data, which we tune using the validation set.

\begin{algorithm}[tb]
   \caption{Maximum Clique Decoder}
   \label{alg:mc_decoder}
\begin{algorithmic}[1]
   \STATE \textbf{Input:} Probability vector $\bp$, graph $G$ with $n$ nodes, number of samplers $\kappa$, sample length $\tau$.
   \STATE \textbf{Output:} Predicted maximum clique $\hat C$ of size $\vert \hat C \vert$.
   \STATE 
   Order nodes via permutation $\pi: V\rightarrow [n]$ such that $\bp_{\pi^{-1}(1)} \geq \bp_{\pi^{-1}(2)} \geq \dots \geq \bp_{\pi^{-1}(n)}$. 
   \FOR{$j=1$ {\bfseries to} $\kappa$ }
   \STATE $\hat C_j \leftarrow \{\pi^{-1}(j)\}$
   \FOR{$i=2$ {\bfseries to} $\tau - \kappa $}
   \IF{$\hat C_j \cup \{\pi^{-1}(j+i)\}$ forms a clique}
   \STATE $\hat C_j  \leftarrow \hat C_j  \cup \{\pi^{-1}(j+i)\}$
   \ENDIF
   \ENDFOR
   \ENDFOR
   \STATE $\hat\Omega \leftarrow \{\hat C_1, \dots, \hat C_\kappa\}$
    \STATE $\hat C \leftarrow \arg\max_{C\in \hat\Omega} \vert C \vert$
\end{algorithmic}
\end{algorithm}

\section{Results}\label{sec:results}

\subsection{Baselines}

We compare our hybrid scattering model against other neural network-based methods, an integer programming-based solver and a traditional heuristic. For the the neural networks, Erd\"{o}s’ GNN~\citep{karalias2020erdos} and RUN-CSP~\citep{toenshoff2021graph}, the authors provide two implementations, one optimized towards computational speed (fast), and the other optimized for best approximation score (accurate).
In order to highlight the contribution of our hybrid scattering model, we also provide results for a pure low-pass model, which only uses GCN-type filters~\citep{kipf2016semi} followed by our decoder module.
We further report results for the integer-programming method Gurobi 9.0~\citep{gurobi}.
Finally, we compare to a traditional heuristic~\citep{grosso2008simple},
a local search procedure with $\eta_1$ random restarts, where an initial clique is continuously modified and possibly improved by
neighbourhood search operations until some termination condition occurs.
The heuristic is iterated for a maximum of $\eta_2$ iterations.
We denote the method for different hyperparameter configurations as $\heuristic(\eta_1,\eta_2)$.



\subsection{Datasets}

We evaluate our model on three popular real-world graph learning datasets, namely IMDB, COLLAB~\citep{yanardag2015deep} and TWITTER~\citep{yan2008mining}.
To further explore how our model performs relative to other methods on more challenging and larger graphs, we introduce two new datasets consisting of graphs generated according to the approach in~\citet{xu2005simple}. The first dataset (SMALL) contains graphs of slightly larger size ($\sim$180 nodes on average) compared to the TWITTER dataset ($\sim$132 nodes on average), which is the real-world dataset with the largest graphs. The second dataset (LARGE) contains significantly larger graphs ($\sim$1324 nodes on average), that are far more challenging. In each of the two datasets, we have three different classes of hardness (easy, medium and hard) that are retained for different hardness parameters (0.2, 0.5 and 0.8) following the approach in~\cite{xu2005simple}.
More details on the datasets can be found in Table~\ref{tab:stat} in Appendix~\ref{app:data}.

\subsection{Evaluation}\label{sec:eval}

To evaluate the different models, we use average test approximation score (mean $\pm$ standard deviation across three runs) and the average time needed to approximate the MC across the graphs of a dataset (in brackets) measured in seconds per graph (s/G).
We note that the time requirement of Gurobi 9.0 can exceed the indicated time limit as it only affects a subroutine of the method.
On the real-world datasets and the proposed SMALL dataset, the approximation score of a given graph is computed by dividing the size of the clique found by the algorithm by the size of the true MC, i.e., if the MC of a graph has size 10 and the algorithm returns a clique of size 9, it would be given a score of 90\%. We then average these scores across all graphs in each dataset. On the proposed LARGE dataset, the MC size is unknown because it is virtually impossible to specify. Instead, we use the predicted MC size from Gurobi 9.0 (0.1s) as a target. Notably, this allows for approximation scores larger than 100\% to arise, which signifies that the model found a better (larger) clique than Gurobi 9.0 (0.1s).

\subsection{Experiments}

\begin{table*}[t!]
\caption{Maximum clique test approximation score (mean $\pm$ std.) and avg. prediction time measured in seconds per graph (in brackets) on real-world datasets compared to the different baselines. In our decoder, we set $\kappa$ equal to 1, 1 and 10 for IMDB, COLLAB and TWITTER, respectively.
We provide results for Gurobi 9.0 with 4 different time budgets, and for the heuristic with different configurations.
}
\label{table:performance}
\begin{center}
\footnotesize
\begin{tabular}{lcccr}
\toprule
Dataset & IMDB & COLLAB & TWITTER \\
\midrule

\textbf{ScatteringClique} (ours)  &  1.000 $\pm$ 0.000 (4e-3)& 0.999 $\pm$ 0.014 (0.029) &0.952 $\pm$ 0.059 (0.05)
\\
GCN (low-pass)  & 0.956 $\pm$ 0.109 (3e-3) & 0.981 $\pm$ 0.085 (0.020) &0.887 $\pm$ 0.111 (0.04)
\\
Erd\H{o}s (fast)  & 1.000 $\pm$ 0.000 (0.08)& 0.982 $\pm$ 0.063 (0.10)& 0.924 $\pm$ 0.133 (0.17)
\\
Erd\H{o}s (accurate) & 1.000 $\pm$ 0.000 (0.10)& 0.990 $\pm$ 0.042 (0.15) & 0.942 $\pm$ 0.111 (0.42)\\
RUN-CSP (fast)    & 0.823 $\pm$ 0.191 (0.11)& 0.912 $\pm$  0.188 (0.14)& 0.909 $\pm$ 0.145 (0.21) \\
RUN-CSP (acc.)    & 0.957 $\pm$ 0.089 (0.12)& 0.987 $\pm$ 0.074 (0.19) & 0.987 $\pm$ 0.063 (0.39)         \\ \hline
Gurobi 9.0 (0.1s)     & 1.000 $\pm$ 0.000 (1e-3)& 0.982 $\pm$ 0.101 (0.05)& 0.803 $\pm$ 0.258 (0.21)\\
Gurobi 9.0 (0.5s)      & 1.000 $\pm$ 0.000 (1e-3)& 0.997 $\pm$ 0.035 (0.06)& 0.996 $\pm$ 0.019 (0.34) \\
Gurobi 9.0 (1s)      & 1.000 $\pm$ 0.000 (1e-3)& 0.999 $\pm$ 0.015 (0.06)&  1.000 $\pm$ 0.000 (0.34)       \\
Gurobi 9.0 (5s)   & 1.000 $\pm$ 0.000 (1e-3)& 1.000 $\pm$ 0.000 (0.06)& 1.000 $\pm$ 0.000 (0.35) \\
$\heuristic(5,10)$ & 0.912 $\pm$ 0.190 (1.0e-2)& 0.276 $\pm$ 0.222 (0.09)& 0.706 $\pm$ 0.188 (0.08) \\
$\heuristic(5,20)$    & 0.972 $\pm$ 0.081 (1.5e-2)& 0.466 $\pm$ 0.274 (0.17)& 0.870 $\pm$  0.124 (0.14) \\
$\heuristic(5,30)$    & 0.998 $\pm$ 0.015 (1.8e-2)& 0.620 $\pm$ 0.299 (0.26)& 0.910 $\pm$  0.094 (0.29) \\
$\heuristic(20,20)$    & 0.973 $\pm$ 0.080 (5.8e-2)& 0.469 $\pm$ 0.277 (0.70)& 0.940 $\pm$  0.098 (0.61) \\
$\heuristic(1,100)$    & 0.919 $\pm$ 0.133 (6.0e-3)& 0.847 $\pm$ 0.257 (0.16)& 0.689 $\pm$  0.260 (0.07) \\
$\heuristic(5,100)$    & 0.997 $\pm$ 0.021 (2.7e-2)& 0.916 $\pm$ 0.201 (0.77)& 0.920 $\pm$  0.084 (0.36) \\
\bottomrule
\end{tabular}
\end{center}
\end{table*}

We first look at the three real-world datasets and provide results in Table~\ref{table:performance}.
In the decoder (Section~\ref{subsec:decoder}), we set $\kappa$ equal to 1, 1 and 10 for the IMDB, COLLAB and TWITTER dataset, respectively.
For the IMDB dataset, our hybrid approach and most other methods (except RUN-CSP) solve the MC problem with a perfect (100\%) approximation score.
However, our model significantly outperforms the other neural network-based models (Erd\"{o}s’ GNN and RUN-CSP) with respect to computation time (although the highly optimized Gurobi solver is slightly faster than our method on this dataset due to the small graph size).
On COLLAB, our model almost reaches perfect performance (99.9\%) in 0.029 s/G, outperforming the other neural network based methods in both speed and approximation score. Some versions of the Gurobi solver are able to achieve 99.9\% and 100\% approximation scores, however, they are slower than our method on this dataset. The TWITTER dataset is the most challenging of the considered real-world datasets.
Here, our method achieves the second highest accuracy among neural networks with 95.2\% compared to 98.7\% for RUN-CSP (accurate). However, our method is nearly seven times faster than RUN-CSP (0.05 s/G vs 0.34 s/G). Similarly, while there are several versions of Gurobi which achieve higher accuracy, these require at least 0.34 s/G. In principle, one could utilize faster versions of Gurobi, but those methods would achieve lower accuracies. For example, Gurobi 9.0 (0.1s) requires four times the time per graph compared to our method (0.21 s/G vs 0.05 s/G) while achieving a much lower accuracy (80.3\% vs 95.2\%).

Figure \ref{SCTGCNcomp2} illustrates the differences between our hybrid scattering  model and the low-pass GCN-based model~\citep{kipf2016semi} and helps explain why the scattering-based model performs better across all three datasets.
It shows that the probability vector $\bp$ generated by our approach is more discriminative than the one produced by the low-pass model.
When using the low-pass model, most nodes are assigned high probabilities (marked in red). The decoder cannot tell which nodes have higher priorities given such a smooth output. This leads to the decoder accepting nodes which are not part of the MC. If the new clique is not a sub-clique of the MC, this in turn can lead to the decoder rejecting nodes which \emph{are} part of the MC. The scattering model on the other hand produces a less smooth representation. The nodes outside the MC are assigned lower probabilities so that our decoder can successfully find the MC. This is consistent with previous work \citep{wenkel2022overcoming} showing that graph scattering helps to alleviate oversmoothing in node classification. 
We emphasize that both models are trained with the same loss function and the same rule-based decoder. Therefore, the differences illustrated in Figure~\ref{SCTGCNcomp2} are the direct result of using a hybrid scattering model. 
More comparison between the scattering model and low-pass model can be found in Appendix~\ref{app:figures}.

\begin{figure}[h]
    \vspace{-20pt}
    \vskip 0.2in
    \begin{center}
    \centerline{\includegraphics[width=0.7\columnwidth]{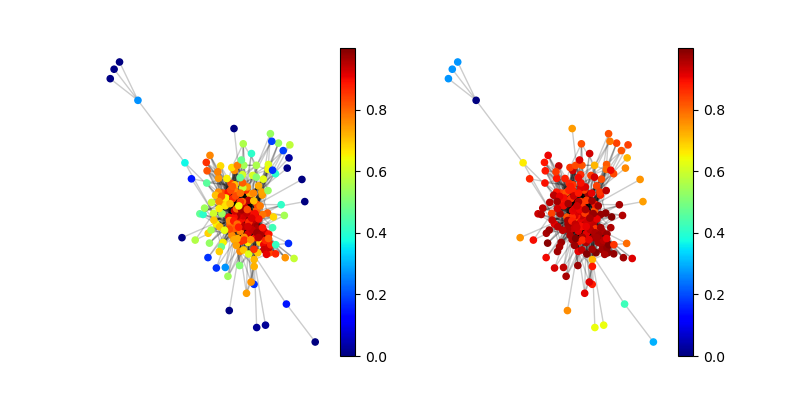}}
    \caption{Comparison of the output probability vector $\bp$ for our model (left) and the low-pass model (right) on a graph taken from TWITTER with ground truth MC of size 10. Our approach followed by our decoder yields the correct clique, while the low-pass model concludes with a clique of size 8.
    }
    \label{SCTGCNcomp2}
    \end{center}
    \vskip -0.2in
\end{figure}

\begin{table}[t]
\caption{
MC test approx. score (mean $\pm$ std.) and avg. prediction time measured in seconds per graph (in brackets) on LARGE dataset for our model compared to the different baselines.
In our decoder, we set $\kappa$ equal to 1, 1 and 10 for the easy, medium and hard graphs, respectively.
As ground truth sizes cannot be estimated for such large graphs, we compare to the cliques found by Gurobi 9.0 (0.1s). As explained in Sec.~\ref{sec:eval}, the inference time of Gurobi can exceed the indicated time budget.
}
\label{table-3}
\begin{center}
\footnotesize
\begin{tabular}{lcccr}
\toprule
Dataset & LARGE-easy & LARGE-medium & LARGE-hard \\
\midrule
\textbf{ScatteringClique} (ours)  & 1.002 $\pm$ 0.053  (0.25) & 1.045 $\pm$ 0.171 (0.18) & 1.102 $\pm$ 0.225 (0.46)
\\
GCN (low-pass) & 0.997 $\pm$ 0.088 (0.23) & 0.925 $\pm$ 0.181 (0.16) &0.829 $\pm$ 0.302 (0.40)
\\
Erd\H{o}s (fast)  & 0.969 $\pm$ 0.103 (1.08)& 0.924 $\pm$ 0.210 (1.61)& 0.895 $\pm$ 0.246 (1.96)
\\
Erd\H{o}s (accurate)  & 0.994 $\pm$ 0.067 (4.74)& 0.943 $\pm$ 0.113 (5.62)& 0.975 $\pm$ 0.086 (5.50)
\\
RUN-CSP (fast)    & 0.925 $\pm$ 0.082 (1.65)& 0.895 $\pm$  0.173 (2.53)& 0.985 $\pm$ 0.134 (1.44) \\
RUN-CSP (accurate)    & 0.985 $\pm$ 0.072 (4.72)& 0.913 $\pm$ 0.169 (4.69) & 1.034 $\pm$ 0.159 (5.58)         \\\hline
Gurobi 9.0 (0.1s)     & 1.000 $\pm$ 0.000 (10.5)&1.000 $\pm$ 0.000 (10.6)&  1.000 $\pm$ 0.000 (9.46)\\
Gurobi 9.0 (0.5s)      & 1.000 $\pm$ 0.000 (12.1)& 1.000 $\pm$ 0.000 (11.0)& 1.000 $\pm$ 0.000 (9.97) \\
Gurobi 9.0 (1.0s)      & 1.000 $\pm$ 0.000 (12.7)& 1.000 $\pm$ 0.000 (11.1)&  1.000 $\pm$ 0.000 (10.1)       \\
Gurobi 9.0 (5.0s)   & 1.000 $\pm$ 0.000 (16.4) & 1.000 $\pm$ 0.000 (14.7)& 1.000 $\pm$ 0.000 (14.2) \\
Gurobi 9.0 (20.0s)   & 1.005 $\pm$ 0.070 (30.0)& 1.094 $\pm$ 0.290 (29.4)& 1.016 $\pm$ 0.122 (29.0) \\
Gurobi 9.0 (50.0s)   & 1.009 $\pm$ 0.082 (65.9)& 1.276 $\pm$ 0.431 (55.4) & 1.335 $\pm$ 0.459 (60.3)\\
$\heuristic(5, 10)$    & 0.326 $\pm$ 0.039 (1.6)& 0.331 $\pm$ 0.050 (2.1)&  0.311 $\pm$ 0.072 (2.2) \\
$\heuristic(5, 20)$    & 0.619 $\pm$ 0.008 (3.2)& 0.617 $\pm$ 0.097  (4.5)&  0.575 $\pm$ 0.130 (4.6) \\
$\heuristic(5, 30)$    & 0.895 $\pm$ 0.008 (4.9)& 0.880 $\pm$ 0.121  (6.7)&  0.813 $\pm$ 0.185 (7.1) \\
$\heuristic(20, 20)$    & 0.623 $\pm$ 0.075 (12.9)& 0.630 $\pm$ 0.095  (17.8)&  0.593 $\pm$ 0.136 (18.5) \\
$\heuristic(1, 100)$    & 0.975 $\pm$ 0.050 (2.7)& 0.972 $\pm$ 0.259 (3.7)& 0.948 $\pm$ 0.399  (4.6) \\
$\heuristic(5, 100)$    & 0.999 $\pm$ 0.003 (13.8)& 1.149 $\pm$ 0.351 (18.8)& 1.296 $\pm$  0.483 (22.9) \\
\bottomrule
\end{tabular}
\end{center}
\end{table}

To further explore how our model performs on \emph{harder} graphs, we now look at the newly proposed datasets generated according to the methodology of~\citet{xu2005simple}. Each dataset contains three subclasses (easy, medium and hard) and our decoder (Section~\ref{subsec:decoder}) uses different parameters $\kappa$ (1, 1 and 10), respectively. Here, we present the results of the most challenging LARGE dataset, shown in Table~\ref{table-3}, while the discussion of the results on the SMALL dataset can be found in Appendix~\ref{app:results}.


Our hybrid scattering model outperforms the other neural network approaches on all three subclasses (easy, medium and hard) in terms of approximation score and inference time. The gap in approximation score is larger on medium and hard samples, highlighting the utility of our model for more complex graphs. At the same time, our model is extremely efficient, exhibiting much lower inference times than the other neural networks (with the exception of GCN, which is a modified version of our hybrid model with band-pass filters removed), e.g., inferring the MC on LARGE-hard 12 times faster than RUN-CSP (accurate), the closest competing neural network.
For sufficiently large time budgets, Gurobi and the heuristic~\citep{xu2005simple} can achieve higher accuracies, however at a significantly higher time cost. For example, on LARGE-hard, only Gurobi 9.0 (50.0s) and $\heuristic(5,100)$ are better, however taking on average 131 and 49 times longer to infer the MC, respectively.

Overall, across all considered datasets, we remark that our hybrid model provides considerable speedup compared  to both the other neural networks and  to Gurobi for large graphs.
We also note that our model takes fewer parameters than the Erd\"{o}s’ GNN~\citep{karalias2020erdos}. They use a multi-head GNN (8 heads, with 64 hidden units each) for each layer, whereas the MLPs used in our method have $d_h=8$ hidden units on all three datasets, without using any multi-head attention mechanism.
This in turn suggests that the use of both GCN filters \emph{and} scattering filters combined yields powerful feature extractors that help learn informative node representations with relatively few parameters. 
In fact, the  Erd\"{o}s’ GNN model contains 1,880,709 parameters for each dataset, while we only use 1,297 parameters for Twitter, 705 for IMDB and 1,001 for COLLAB, respectively. This corresponds to 0.07\%, 0.04\% and 0.05\% of the parameter counts of Erd\"{o}s’ GNN, while achieving better performance in terms of both inference time and accuracy.
Further, since the  $K$-layer GCN-type model has time complexity $O(K \vert E\vert d_h + Kn d_h^2)$~\citep{wu2020comprehensive}, it is important to reduce $d_h$. We use $d_h=8$ (and no multi-head attention), resulting in low time complexity and thus faster inference.


\section{Conclusion}
In this paper, we show the expressive power of GNNs can be critical for solving graph CO problems.
Our results suggest that low-pass models that generally enforce smoothness over graph neighborhoods due to oversmoothing, make the nodes indistinguishable, which leads to unfavorable node representations for the MC problem. Inspired by the geometric scattering transform, we propose an efficient scattering-based GNN to overcome the oversmoothing problem.
We use a two-term loss function (previously derived by a different method in \cite{karalias2020erdos}), which encourages our network to find a set of nodes that are both highly connected  and contained within a clique, which in conjunction mimic the otherwise hard to optimize problem of finding maximum cliques. Our model produces efficient node representations that enable us to approximate the maximum clique at a fast speed using a constrain-preserving greedy decoder.
Overall, our performance is competitive with other neural network-based methods on common benchmark datasets, while largely reducing the number of parameters and computation time. When run on larger and more complex graphs, our hybrid scattering model outperforms other neural network approaches.
\section{Acknowledgements}
We  would like to thank Yiwei Bai for discussions. This work was partially funded by \mbox{Fin-ML} CREATE graduate studies scholarship for PhD~[Frederik Wenkel]; IVADO (Institut de valorisation des données) grant PRF-2019-3583139727, FRQNT (Fonds de recherche du Québec - Nature et technologies) grant 299376, Canada CIFAR AI Chair; and NIH (National Institutes of Health) grant NIGMS-R01GM135929~[Guy Wolf]. The content provided here is solely the responsibility of the authors and does not necessarily represent the official views of the funding agencies.



\newpage

\bibliographystyle{unsrtnat}
\bibliography{references}

\begin{thebibliography}{28}
\providecommand{\natexlab}[1]{#1}
\providecommand{\url}[1]{\texttt{#1}}
\expandafter\ifx\csname urlstyle\endcsname\relax
  \providecommand{\doi}[1]{doi: #1}\else
  \providecommand{\doi}{doi: \begingroup \urlstyle{rm}\Url}\fi

\bibitem[Scarselli et~al.(2008)Scarselli, Gori, Tsoi, Hagenbuchner, and
  Monfardini]{scarselli2008graph}
Franco Scarselli, Marco Gori, Ah~Chung Tsoi, Markus Hagenbuchner, and Gabriele
  Monfardini.
\newblock The graph neural network model.
\newblock \emph{IEEE transactions on neural networks}, 20\penalty0
  (1):\penalty0 61--80, 2008.

\bibitem[Gori et~al.(2005)Gori, Monfardini, and Scarselli]{gori2005new}
Marco Gori, Gabriele Monfardini, and Franco Scarselli.
\newblock A new model for learning in graph domains.
\newblock In \emph{Proceedings. 2005 IEEE International Joint Conference on
  Neural Networks, 2005.}, volume~2, pages 729--734. IEEE, 2005.

\bibitem[Kipf and Welling(2016)]{kipf2016semi}
Thomas~N. Kipf and Max Welling.
\newblock Semi-supervised classification with graph convolutional networks.
\newblock In \emph{the 4th International Conference on Learning Representations
  (ICLR)}, 2016.

\bibitem[Gilmer et~al.(2017)Gilmer, Schoenholz, Riley, Vinyals, and
  Dahl]{gilmer2017neural}
Justin Gilmer, Samuel~S Schoenholz, Patrick~F Riley, Oriol Vinyals, and
  George~E Dahl.
\newblock Neural message passing for quantum chemistry.
\newblock In \emph{International conference on machine learning}, pages
  1263--1272. PMLR, 2017.

\bibitem[Li et~al.(2018{\natexlab{a}})Li, Chen, and
  Koltun]{li2018combinatorial}
Zhuwen Li, Qifeng Chen, and Vladlen Koltun.
\newblock Combinatorial optimization with graph convolutional networks and
  guided tree search.
\newblock \emph{Advances in Neural Information Processing Systems}, page 539,
  2018{\natexlab{a}}.

\bibitem[Karalias and Loukas(2020)]{karalias2020erdos}
Nikolaos Karalias and Andreas Loukas.
\newblock Erdos goes neural: an unsupervised learning framework for
  combinatorial optimization on graphs.
\newblock \emph{Advances in Neural Information Processing Systems},
  33:\penalty0 6659--6672, 2020.

\bibitem[Joshi et~al.(2019)Joshi, Laurent, and Bresson]{joshi2019efficient}
Chaitanya~K Joshi, Thomas Laurent, and Xavier Bresson.
\newblock An efficient graph convolutional network technique for the travelling
  salesman problem.
\newblock \emph{arXiv preprint arXiv:1906.01227}, 2019.

\bibitem[Li et~al.(2018{\natexlab{b}})Li, Han, and Wu]{li2018deeper}
Qimai Li, Zhichao Han, and Xiao-Ming Wu.
\newblock Deeper insights into graph convolutional networks for semi-supervised
  learning.
\newblock In \emph{Thirty-Second AAAI conference on artificial intelligence},
  2018{\natexlab{b}}.

\bibitem[Horaud and Skordas(1989)]{horaud1989stereo}
Radu Horaud and Thomas Skordas.
\newblock Stereo correspondence through feature grouping and maximal cliques.
\newblock \emph{IEEE Transactions on Pattern Analysis and Machine
  Intelligence}, 11\penalty0 (11):\penalty0 1168--1180, 1989.

\bibitem[Zhang et~al.(2005)Zhang, Abu-Khzam, Baldwin, Chesler, Langston, and
  Samatova]{zhang2005genome}
Yun Zhang, Faisal~N Abu-Khzam, Nicole~E Baldwin, Elissa~J Chesler, Michael~A
  Langston, and Nagiza~F Samatova.
\newblock Genome-scale computational approaches to memory-intensive
  applications in systems biology.
\newblock In \emph{SC'05: Proceedings of the 2005 ACM/IEEE Conference on
  Supercomputing}, pages 12--12. IEEE, 2005.

\bibitem[Wenkel et~al.(2022)Wenkel, Min, Hirn, Perlmutter, and
  Wolf]{wenkel2022overcoming}
Frederik Wenkel, Yimeng Min, Matthew Hirn, Michael Perlmutter, and Guy Wolf.
\newblock Overcoming oversmoothness in graph convolutional networks via hybrid
  scattering networks.
\newblock \emph{arXiv preprint arXiv:2201.08932}, 2022.

\bibitem[Grossmann(1988)]{grossmann1988wavelet}
Alex Grossmann.
\newblock Wavelet transforms and edge detection.
\newblock In \emph{Stochastic processes in physics and engineering}, pages
  149--157. Springer, 1988.

\bibitem[Shuman et~al.(2016)Shuman, Ricaud, and
  Vandergheynst]{shuman2016vertex}
David~I Shuman, Benjamin Ricaud, and Pierre Vandergheynst.
\newblock Vertex-frequency analysis on graphs.
\newblock \emph{Applied and Computational Harmonic Analysis}, 40\penalty0
  (2):\penalty0 260--291, 2016.

\bibitem[Defferrard et~al.(2016)Defferrard, Bresson, and
  Vandergheynst]{defferrard2016convolutional}
Micha{\"e}l Defferrard, Xavier Bresson, and Pierre Vandergheynst.
\newblock Convolutional neural networks on graphs with fast localized spectral
  filtering.
\newblock \emph{Advances in neural information processing systems}, 29, 2016.

\bibitem[Min et~al.(2020)Min, Wenkel, and Wolf]{min2020scattering}
Yimeng Min, Frederik Wenkel, and Guy Wolf.
\newblock Scattering gcn: Overcoming oversmoothness in graph convolutional
  networks.
\newblock \emph{Advances in Neural Information Processing Systems}, 33, 2020.

\bibitem[Gama et~al.(2019)Gama, Ribeiro, and Bruna]{gama:diffScatGraphs2018}
Fernando Gama, Alejandro Ribeiro, and Joan Bruna.
\newblock Diffusion scattering transforms on graphs.
\newblock In \emph{International Conference on Learning Representations}, 2019.

\bibitem[Zou and Lerman(2019)]{zou:graphCNNScat2018}
Dongmian Zou and Gilad Lerman.
\newblock Graph convolutional neural networks via scattering.
\newblock \emph{Applied and Computational Harmonic Analysis}, 49(3)\penalty0
  (3):\penalty0 1046--1074, 2019.

\bibitem[Gao et~al.(2019)Gao, Wolf, and Hirn]{gao:graphScat2018}
Feng Gao, Guy Wolf, and Matthew Hirn.
\newblock Geometric scattering for graph data analysis.
\newblock In \emph{Proceedings of the 36th International Conference on Machine
  Learning, PMLR}, volume~97, pages 2122--2131, 2019.

\bibitem[Coifman and Maggioni(2006)]{coifman2006diffusion}
Ronald~R Coifman and Mauro Maggioni.
\newblock Diffusion wavelets.
\newblock \emph{Applied and computational harmonic analysis}, 21\penalty0
  (1):\penalty0 53--94, 2006.

\bibitem[Veli{\v{c}}kovi{\'c} et~al.(2018)Veli{\v{c}}kovi{\'c}, Cucurull,
  Casanova, Romero, Li{\`o}, and Bengio]{velivckovic2018graph}
Petar Veli{\v{c}}kovi{\'c}, Guillem Cucurull, Arantxa Casanova, Adriana Romero,
  Pietro Li{\`o}, and Yoshua Bengio.
\newblock Graph attention networks.
\newblock In \emph{International Conference on Learning Representations}, 2018.

\bibitem[Brody et~al.(2021)Brody, Alon, and Yahav]{brody2021attentive}
Shaked Brody, Uri Alon, and Eran Yahav.
\newblock How attentive are graph attention networks?
\newblock In \emph{International Conference on Learning Representations}, 2021.

\bibitem[Toenshoff et~al.(2021)Toenshoff, Ritzert, Wolf, and
  Grohe]{toenshoff2021graph}
Jan Toenshoff, Martin Ritzert, Hinrikus Wolf, and Martin Grohe.
\newblock Graph neural networks for maximum constraint satisfaction.
\newblock \emph{Frontiers in artificial intelligence}, 3:\penalty0 580607,
  2021.

\bibitem[{Gurobi Optimization, LLC}(2022)]{gurobi}
{Gurobi Optimization, LLC}.
\newblock {Gurobi Optimizer Reference Manual}, 2022.
\newblock URL \url{https://www.gurobi.com}.

\bibitem[Grosso et~al.(2008)Grosso, Locatelli, and Pullan]{grosso2008simple}
Andrea Grosso, Marco Locatelli, and Wayne Pullan.
\newblock Simple ingredients leading to very efficient heuristics for the
  maximum clique problem.
\newblock \emph{Journal of Heuristics}, 14\penalty0 (6):\penalty0 587--612,
  2008.

\bibitem[Yanardag and Vishwanathan(2015)]{yanardag2015deep}
Pinar Yanardag and SVN Vishwanathan.
\newblock Deep graph kernels.
\newblock In \emph{Proceedings of the 21th ACM SIGKDD international conference
  on knowledge discovery and data mining}, pages 1365--1374, 2015.

\bibitem[Yan et~al.(2008)Yan, Cheng, Han, and Yu]{yan2008mining}
Xifeng Yan, Hong Cheng, Jiawei Han, and Philip~S Yu.
\newblock Mining significant graph patterns by leap search.
\newblock In \emph{Proceedings of the 2008 ACM SIGMOD international conference
  on Management of data}, pages 433--444, 2008.

\bibitem[Xu et~al.(2005)Xu, Boussemart, Hemery, and Lecoutre]{xu2005simple}
Ke~Xu, Frederic Boussemart, Fred Hemery, and Christophe Lecoutre.
\newblock A simple model to generate hard satisfiable instances.
\newblock In \emph{Proceedings of the 19th international joint conference on
  Artificial intelligence}, pages 337--342, 2005.

\bibitem[Wu et~al.(2020)Wu, Pan, Chen, Long, Zhang, and
  Philip]{wu2020comprehensive}
Zonghan Wu, Shirui Pan, Fengwen Chen, Guodong Long, Chengqi Zhang, and S~Yu
  Philip.
\newblock A comprehensive survey on graph neural networks.
\newblock \emph{IEEE transactions on neural networks and learning systems},
  32\penalty0 (1):\penalty0 4--24, 2020.

\end{thebibliography}

\section{Checklist}
\begin{enumerate}
    \item Do the main claims made in the abstract and introduction accurately reflect the paper's contributions and scope? \textcolor{blue}{[Yes]}
    \item Have you read the ethics review guidelines and ensured that your paper conforms to them? \textcolor{blue}{[Yes]}
    \item Did you discuss any potential negative societal impacts of your work? \textcolor{blue}{[No]}
    \item Did you describe the limitations of your work? \textcolor{blue}{[No]}
    \item Did you state the full set of assumptions of all theoretical results? \textcolor{blue}{[Yes]}
    \item Did you include complete proofs of all theoretical results? \textcolor{blue}{[Yes]}
    \item Did you include the code, data, and instructions needed to reproduce the main experimental results (either in the supplemental material or as a URL)? \textcolor{blue}{[Yes]} A link to the code is provided in Appendix~\ref{app:code}.
    \item Did you specify all the training details (e.g., data splits, hyperparameters, how they were chosen)? \textcolor{blue}{[Yes]}
    \item Did you report error bars (e.g., with respect to the random seed after running experiments multiple times)? \textcolor{blue}{[Yes]} We report the standard deviation of our model.
    \item Did you include the amount of compute and the type of resources used (e.g., type of GPUs, internal cluster, or cloud provider)? \textcolor{blue}{[No]}
    
\end{enumerate}

\newpage
\section*{Appendix}
\appendix
\section{Implementation of ScatteringClique Model}\label{app:code}
Our code can be found at\\
\url{https://github.com/yimengmin/GeometricScatteringMaximalClique}.

\section{Dataset Statistics}\label{app:data}

\begin{table}[h]
\caption{Dataset statistics for the discussed datasets. }
\label{tab:stat}
\begin{center}

\begin{tabular}{lcccr}
\toprule
Dataset &\# Graphs & Avg. \# Nodes & Avg. \# Edges &  \\ 
\midrule
IMDB &$1,000$ & $19.8 \pm 10.0$ & $96.5\pm105.6$  \\
COLLAB & $5,000$ & $74.5\pm62.3$ & $2457.2\pm6438.9$ \\
TWITTER &$973$ & $131.8\pm 64.4$ & $1709.3\pm 1559.7$  \\
\hline
SMALL-easy & 1,000 & 209.0 $\pm$ 49.9& 2293.6$\pm$ 1147.8 \\
SMALL-medium & 1,000 & 187.6 $\pm$62.7 & 3378.2 $\pm$ 2077.9 \\
SMALL-hard & 1,000 & 181.3$\pm$ 62.2 & 4165.3$ \pm$ 2780.2 \\
\hline
LARGE-easy & 1,000 & 1346.7$\pm$ 222.9 & 48631.9$\pm$ 12327.7 \\
LARGE-medium & 1,000 & 1334.7$\pm$207.7 & 79689.8$\pm$22527.2 \\
LARGE-hard & 1,000 & 1324.5$\pm$ 209.0 & 105939.9$\pm$ 32414.7 \\
\bottomrule
\end{tabular}
\end{center}
\end{table}



\section{Further Discussion of Loss Function}\label{app:loss}

Here, we further discuss the relationship between our loss function (Eq.~\ref{eq:loss}) and the loss function proposed in \citet{karalias2020erdos}. Their loss function is stated in Corollary~\ref{lem:loss2} of their paper and, using our notations, reads
\begin{equation}\label{eq:loss-KL}
    L(\bp) = \gamma - (\beta' + 1) \sum_{(u,v)\in E} w_{u,v} \bp_u \bp_v + \frac{\beta'}{2} \sum_{u\neq v} \bp_u \bp_v,
\end{equation}
with positive constants $\gamma, \beta' \geq 0$. Setting $\gamma=0$, $\beta'=1$ and then dividing Eq.~\ref{eq:loss-KL} by 2 corresponds to our loss function (Eq.~\ref{eq:loss}) with the hyperparameter $\beta=1/4$. We note that our loss function does not satisfy Corollary~1 from~\citet{karalias2020erdos}, which provides a lower bound on the probability of finding a clique of at least a certain size. Details can be found in the original paper. However, our loss function has proven viable in practice and, in conjunction with our hybrid GNN framework and decoder, is able to outperform the approach from~\citet{karalias2020erdos} in both accuracy and inference time.

\section{Additional Results on Small Graphs}\label{app:results}

We now present results on the proposed SMALL dataset where graphs were generated following the approach in~\citet{xu2005simple}. Notably, in contrast to the LARGE dataset discussed in Sec.~\ref{sec:results} of the main paper, we now have access to the ground truth MC, enabling a more precise evaluation of the performance. The results are presented in Table~\ref{table-2}.

With respect to approximation score, our model clearly outperforms all other neural network approaches across all three difficulties. The closest competing method, i.e., Erd\H{o}s (accurate) on easy and medium; RUN-CSP (acc.) on hard, is outperformed by 1.6, 3.6 and 5.9 percentage points, respectively, suggesting our method holds up particularly well for increasing difficulties of the task. When given sufficiently large time budgets, Gurobi can achieve higher approximation scores, however at a significantly higher time cost. For instance, Gurobi 9.0 (0.5s) takes 32 times longer for the medium difficulty graphs. The heuristic~\citep{grosso2008simple} also performs well, outperforming our model for the easy and hard samples, while on the other hand requiring more than three times the inference time.

\begin{table}
\caption{
MC test approx. score (mean $\pm$ std.) and avg. prediction time measured in seconds per graph (in brackets) on SMALL dataset for our model compared to the different baselines.
In our decoder, we set $\kappa$ equal to 1, 1 and 10 for the easy, medium and hard subclass, respectively.
As explained in Sec.~\ref{sec:eval}, the inference time of Gurobi can exceed the indicated time budget.
}
\label{table-2}
\begin{center}
\footnotesize
\begin{tabular}{lcccr}
\toprule
Dataset & SMALL-easy & SMALL-medium & SMALL-hard \\
\midrule
\textbf{ScatteringClique}  & 0.993 $\pm 0.017$ (0.050) & 0.935 $\pm$ 0.102 (0.021) &0.846 $\pm$ 0.177 (0.184)
\\
GCN (low-pass) & 0.951 $\pm 0.010$ (0.042) & 0.873 $\pm$ 0.145 (0.019) &0.776 $\pm$ 0.194 (0.156)
\\
Erd\H{o}s (fast)  & 0.954 $\pm$ 0.087 (0.28)& 0.826 $\pm$ 0.135 (0.31)& 0.734 $\pm$ 0.128 (0.39)
\\
Erd\H{o}s (accurate)  & 0.977 $\pm$ 0.053 (0.47)& 0.899 $\pm$ 0.103 (0.62)& 0.784 $\pm$ 0.163 (0.59)
\\
RUN-CSP (fast)    & 0.948 $\pm$ 0.031 (0.34)& 0.849 $\pm$  0.137 (0.35)& 0.753 $\pm$ 0.175 (0.51) \\
RUN-CSP (acc.)    & 0.967 $\pm$ 0.079 (0.72)& 0.883 $\pm$ 0.069 (0.69) & 0.787 $\pm$ 0.136 (0.89)         \\\hline
Gurobi 9.0 (0.1s)       & 0.991 $\pm$ 0.000 (0.42)& 0.806 $\pm$ 0.101 (0.41)& 0.742 $\pm$ 0.258 (0.42)\\
Gurobi 9.0 (0.2s)       & 0.998 $\pm$ 0.000 (0.49)& 0.878 $\pm$ 0.035 (0.51)& 0.843 $\pm$ 0.019 (0.48) \\
Gurobi 9.0 (0.5s)       & 1.000 $\pm$ 0.000 (0.55)& 0.977 $\pm$ 0.015 (0.68)&  0.962 $\pm$ 0.085 (0.60)       \\
Gurobi 9.0 (1s)   & 1.000 $\pm$ 0.000 (0.65)& 0.991$\pm$ 0.043 (0.91)& 0.989 $\pm$ 0.047 (0.70) \\
$\heuristic(5,10)$    & 0.710$\pm$ 0.147 (0.19)& 0.657 $\pm$ 0.210 (0.15)& 0.559 $\pm$ 0.206 (0.13) \\
$\heuristic(5,20)$    & 0.995$\pm$ 0.046 (0.33)& 0.829 $\pm$ 0.166 (0.31)& 0.772 $\pm$  0.171 (0.27) \\
$\heuristic(5,30)$    & 0.996$\pm$ 0.037 (0.46)& 0.879 $\pm$ 0.146 (0.44)& 0.841 $\pm$  0.163 (0.39) \\
$\heuristic(20,20)$    & 0.995 $\pm$ 0.045 (1.59)& 0.902 $\pm$ 0.135 (1.19)& 0.849 $\pm$  0.159 (1.06) \\
$\heuristic(1,100)$    & 0.952 $\pm$ 0.098 (0.11)& 0.755 $\pm$ 0.198 (0.11)& 0.689 $\pm$ 0.201  (0.14) \\
$\heuristic(5,100)$    & 0.996 $\pm$ 0.036 (0.54)& 0.888 $\pm$ 0.145 (0.52)& 0.891 $\pm$  0.142 (0.67) \\
\bottomrule
\end{tabular}
\end{center}
\end{table}

\section{Additional Figures}\label{app:figures}

In this Section, we provide additional figures, analogous to Figure~\ref{SCTGCNcomp2} of the main text which compare the probability vector $\bp$ output by the hybrid scattering model to the vector output by a pure low-pass model. As with Figure~\ref{SCTGCNcomp2}, these figures demonstrate that the output of the scattering model is much less smooth and therefore better able to discriminate which vertices are members of the MC.

\begin{figure}[ht]
\vskip 0.2in
\begin{center}
\centerline{\includegraphics[width=0.8\columnwidth]{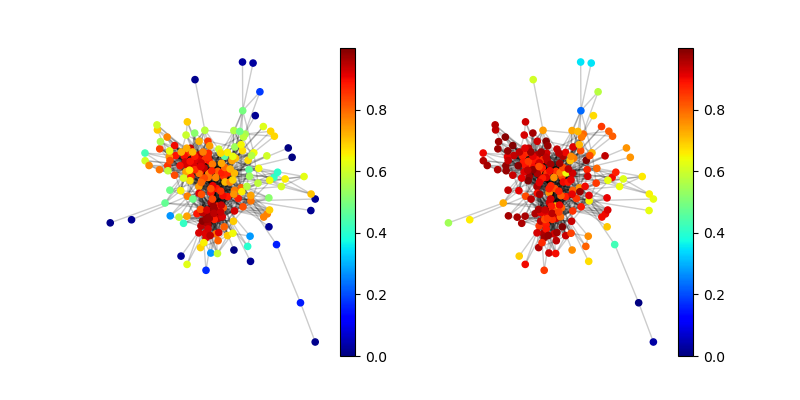}}
\caption{
Comparison of the output probability vector $\bp$ for our hybrid scattering model (left) and the low-pass model (right) on a graph taken from the TWITTER dataset with MC size 11. Our model returns a clique of size 9, while the low-pass model returns a clique of 5 nodes.
}
\label{SCTGCNcomp3}
\end{center}
\vskip -0.2in
\end{figure}

\begin{figure}[ht]
\vskip 0.2in
\begin{center}
\centerline{\includegraphics[width=0.8\columnwidth]{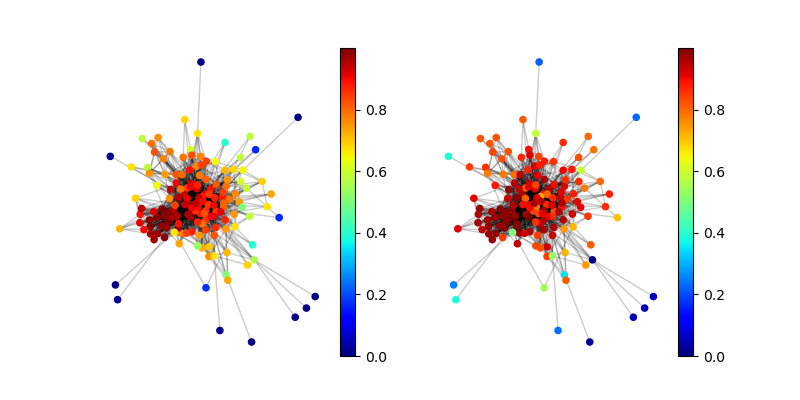}}
\caption{
Comparison of the output probability vector $\bp$ for our hybrid scattering model (left) and the low-pass model (right) on a graph taken from the TWITTER dataset with MC size 14. Our model returns the correct clique, while the low-pass model returns a clique of size 12.
}
\label{SCTGCNcomp4}
\end{center}
\vskip -0.2in
\end{figure}

\begin{figure}[ht]
\vskip 0.2in
\begin{center}
\centerline{\includegraphics[width=0.8\columnwidth]{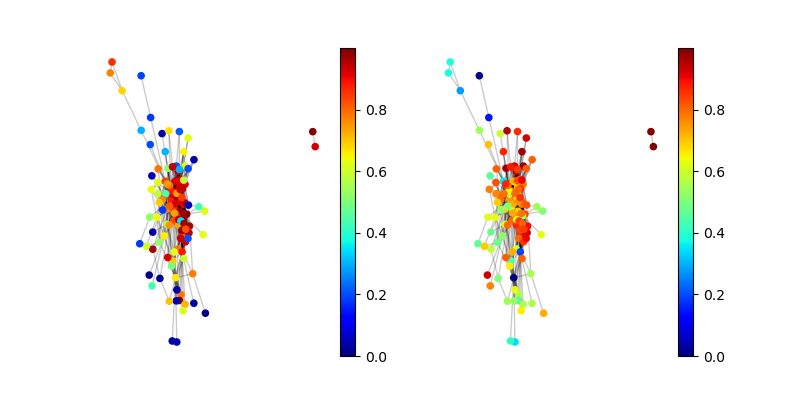}}
\caption{
Comparison of the output probability vector $\bp$ for our hybrid scattering model (left) and the low-pass model (right) on a graph taken from the TWITTER dataset with MC size 12. Our model returns a clique of size 11, while the low-pass model returns a clique of size 8.
}
\label{SCTGCNcomp5}
\end{center}
\vskip -0.2in
\end{figure}

\begin{figure}[ht]
\vskip 0.2in
\begin{center}
\centerline{\includegraphics[width=0.8\columnwidth]{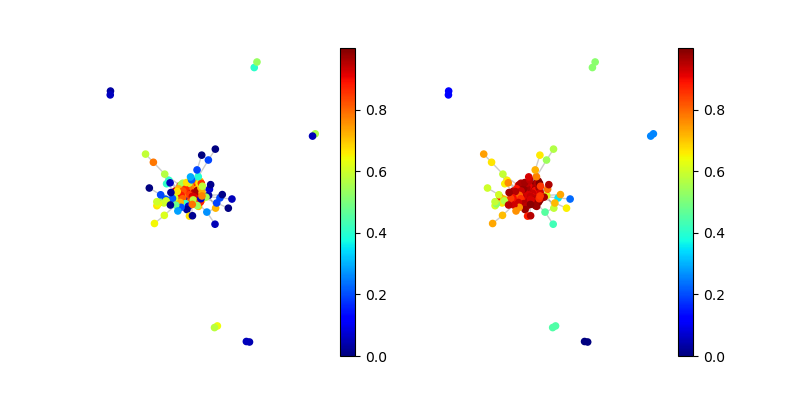}}
\caption{
Comparison of the output probability vector $\bp$ for our hybrid scattering model (left) and the low-pass model (right) on a graph taken from the TWITTER dataset with MC size 11. Our model returns a clique of size 10, while the low-pass model returns a clique of size 6.
}
\label{SCTGCNcomp6}
\end{center}
\vskip -0.2in
\end{figure}

\end{document}